
\documentclass[nohyperref]{article}

\usepackage{microtype}
\usepackage{graphicx}
\usepackage{subfigure}
\usepackage{booktabs} 

\usepackage{hyperref}
\usepackage{stmaryrd}



\usepackage[accepted]{wfvml2022}

\usepackage{amsmath}
\usepackage{amssymb}
\usepackage{mathtools}
\usepackage{amsthm}

\usepackage[capitalize,noabbrev]{cleveref}

\theoremstyle{plain}
\newtheorem{theorem}{Theorem}[section]

\theoremstyle{definition}

\theoremstyle{remark}
\newtheorem{remark}[theorem]{Remark}

\usepackage[textsize=tiny]{todonotes}

\DeclareSymbolFont{bbold}{U}{bbold}{m}{n}
\DeclareSymbolFontAlphabet{\mathbbold}{bbold}
\newcommand{\vect}[1]{\mathbbold{#1}}
\newcommand{\vectorones}[1][]{\vect{1}_{#1}}

\newcommand{\OF}{\mathsf{F}}
\newcommand{\OG}{\mathsf{G}}

\newcommand{\ON}{\mathsf{N}}

\newcommand{\Lip}{\operatorname{Lip}}

\newcommand{\norm}[2]{\|#1\|_{#2}}

\newcommand{\suchthat}{\;\ifnum\currentgrouptype=16 \middle\fi|\;}
\newcommand{\until}[1]{\{1,\dots, #1\}}
\newcommand{\subscr}[2]{#1_{\textup{#2}}}

\newcommand{\setdef}[2]{\{#1 \; | \; #2\}}

\newcommand{\map}[3]{#1: #2 \rightarrow #3}

\newcommand{\real}{\mathbb{R}}

\newcommand{\diag}{\mathrm{diag}}

\newcommand{\scirc}{\raise1pt\hbox{$\,\scriptstyle\circ\,$}}

\newcommand\oprocendsymbol{\hbox{$\square$}}
\newcommand\oprocend{\relax\ifmmode\else\unskip\hfill\fi\oprocendsymbol}

\sloppy

\wfvmltitlerunning{Robust Training and Verification of Implicit Neural Networks}

\begin{document}

\twocolumn[
\wfvmltitle{Robust Training and Verification of Implicit Neural Networks: \\A Non-Euclidean Contractive Approach
}



\wfvmlsetsymbol{equal}{*}

\begin{wfvmlauthorlist}
\wfvmlauthor{Saber Jafarpour}{equal,yyy}
\wfvmlauthor{Alexander Davydov}{equal,xxx}
\wfvmlauthor{Matthew Abate}{yyy}
\wfvmlauthor{Francesco Bullo}{xxx}
\wfvmlauthor{Samuel Coogan}{yyy}
\end{wfvmlauthorlist}

\wfvmlaffiliation{yyy}{School of Electrical and Computer Engineering, Georgia Institute of Technology, Atlanta, USA, (e-mail: \{saber, matt.abate, sam.coogan\}@gatech.edu)}
\wfvmlaffiliation{xxx}{Center for Control, Dynamical Systems, and Computation, University of California, Santa Barbara, USA, (e-mail: \{davydov, bullo\}@ucsb.edu)}

\wfvmlcorrespondingauthor{Saber Jafarpour}{saber@gatech.edu}

\wfvmlkeywords{Machine Learning, Formal Verification}

\vskip 0.3in
]



\printAffiliationsAndNotice{\wfvmlEqualContribution} 

\begin{abstract}
This paper proposes a theoretical and computational framework for
training and robustness verification of implicit neural networks based upon non-Euclidean contraction theory.
The basic idea is to cast the robustness analysis of a neural network as a reachability problem and use (i) the $\ell_{\infty}$-norm input-output Lipschitz constant and (ii) the tight inclusion function of the network to over-approximate its reachable sets. 
First, for a given implicit neural network, we use $\ell_{\infty}$-matrix measures to propose sufficient conditions for its well-posedness, design an iterative algorithm to compute its fixed points, and provide upper bounds for its $\ell_\infty$-norm input-output Lipschitz constant.
Second, we introduce a related embedded network and show that the embedded network can be used to provide an $\ell_\infty$-norm box over-approximation of the reachable sets of the original network. Moreover, we use the embedded network to design an iterative algorithm for computing the upper bounds of the original system's tight inclusion function. 
Third, we use the upper bounds of the Lipschitz constants and the upper bounds of the tight inclusion functions to design two algorithms for the training and robustness verification of implicit neural networks. 
Finally, we apply our algorithms to train implicit neural networks on the MNIST dataset and compare the robustness of our models with the models trained via existing approaches in the literature. 
\end{abstract}

\section{Introduction}

Recent advances in machine learning have led to increasing deployment
of neural networks in real-world applications, including natural
language processing, computer vision, and self-driving
vehicles. Despite their remarkable computational power, neural networks are notoriously vulnerable to adversarial
attacks; a small perturbations in the input can lead to large
deviations in the
output~\citep{CZ-WZ-IS-JB-DE-IG-RF:13}. Understanding this input
sensitivity is essential in safety-critical applications, since the
consequences of adversarial perturbations can be disastrous. Unfortunately, many of the existing approaches for robustness analysis of neural networks either (i) are based on specific attacks and do not provide any formal guarantees~\citep{AA-LE-AI-KK:18}, or (ii) provide guarantees which are too conservative~\citep{CZ-WZ-IS-JB-DE-IG-RF:13}, or (iii) are not scalable to large-scale problems~\citep{PLC-JCP:20}. As a result, there has been a huge interest in developing computationally tractable and non-conservative algorithms for training and verification of robust neural networks. 

Implicit neural networks are a class of learning
models that replace the explicit hidden layers  with an implicit
equation~\cite{SB-JZK-VK:19,LEG-FG-BT-AA-AYT:21}.
Compared to traditional neural networks, implicit neural networks are known to have
advantages including (i) being more suitable for certain class of learning problems such as
constrained optimization problems~\cite{BA-JZK:17} (ii) being more
memory efficient while maintaining comparable
accuracy~\cite{SB-JZK-VK:19}, and (iii) showing improved
training due to fewer vanishing and exploding
gradients~\cite{AK-ZZ-VS:20}. Despite their benefits, implicit networks can suffer from well-posedness issues and convergence instabilities. Additionally, their input-output behavior may suffer from robustness issues and adversarial perturbations. We note that many of the classical robustness analysis tools for traditional neural networks are either not applicable to implicit neural networks or will lead to conservative results. Indeed, robustness of implicit neural networks is not yet well understood and open questions remain regarding their robust training and verification.

Most of the existing approaches for studying robustness of neural networks focus on either the $\ell_2$-norm or
$\ell_{\infty}$-norm robustness measures. For neural networks with
high-dimensional inputs and subject to dense perturbations, 
$\ell_2$-norm robustness measures are known to provide overly
conservative estimates of robustness and are less informative than
their $\ell_{\infty}$-norm counterparts. In this paper, we propose a framework based on  contraction theory with respect to non-Euclidean $\ell_{\infty}$-norm to study well-posedness, stability, and robustness of implicit neural networks. To provide robustness guarantees, we over-approximate reachable set of implicit neural networks using (i) $\ell_{\infty}$-norm input-output Lipschitz constants, and (ii) input-output tight inclusion functions. We note that, in general, finding the Lipschitz constants and tight inclusion functions of implicit neural networks can be computationally challenging. Using our non-Euclidean contractive approach, we provide non-conservative and computationally tractable estimates of the $\ell_{\infty}$-norm input-output Lipschitz constants and the tight inclusion functions of implicit neural networks. We then use these estimates of the Lipschitz constants and the inclusion functions to design two algorithms for training and verification of implicit neural networks with respect to $\ell_{\infty}$-box input perturbations.  Finally, we evaluate the performance and efficiency of our algorithms for training robust implicit neural networks on the MNIST dataset. This paper is a review of the accepted
papers~\citep{SJ-AD-AVP-FB:21f,SJ-MA-AD-FB-SC:21y} and the submitted
paper~\citep{AD-SJ-MA-FB-SC:21z}.


\section{Related works}

\paragraph{Robustness of neural networks.}
Starting with~\citep{IJG-JS-CZ:15}, a large body of research has focused on the design of neural networks that are robust with respect to adversarial perturbations~\citep{NP-PM-XW-SJ-AS:16}. Unfortunately, many of these approaches are based on robustness with respect to specific attacks and they do not provide formal robustness guarantees~\citep{AA-LE-AI-KK:18}. Recent research has focused on 
providing provable robustness guarantees for
neural networks~\citep{AM-AM-LS-DT-AV:17,NC-DW:17}. Rigorous methods for training and/or verifying neural networks generally fall into four different categories (i) Lipschitz bound
methods~\citep{MF-AR-HH-MM-GJP:19,AV-KS:18,PLC-JCP:20}, (ii) interval
bound methods~\citep{MM-TG-MV:18,SG-etal:18,HZ-etal:20}, (iii)
optimization-based methods~\citep{EW-ZK:18,HZ-etal:18}, and (iv)
probabilistic methods~\citep{JC-ER-JZK:19,BL-CC-WW-LC:19}. 

\paragraph{Implicit learning models.}

Several frameworks for studying implicit models of learning have been developed in the literature~\citep{SB-JZK-VK:19,LEG-FG-BT-AA-AYT:21}. Regarding the well-posedness of implicit neural networks, \citep{LEG-FG-BT-AA-AYT:21} proposes a sufficient spectral condition for existence of solutions for the fixed point
equation. In~\citep{EW-JZK:20,MR-RW-IRM:20}, using monotone operator
theory, a suitable parametrization of the weight matrix is proposed
which guarantees the stable convergence of suitable fixed point
iterations.  The work \citep{SJ-AD-AVP-FB:21f} proposes non-Euclidean
contraction theory to design implicit neural networks and study their
well-posedness, stability, and robustness with respect to the
$\ell_\infty$-norm. Regarding the robustness guarantees, compared to the traditional neural networks, there are far fewer works on the robust verification and training of implicit neural network. In~\citep{LEG-FG-BT-AA-AYT:21} a sensitivity-based
robustness analysis for implicit neural network is proposed. Approximation of the Lipschitz constants of deep equilibrium
networks has been studied in~\citep{CP-EW-JZK:21,MR-RW-IRM:20}. Recently, the ellipsoid methods based on semi-definite programming~\citep{TC-JBL-VM-EP:21} and the interval-bound propagation method~\citep{CW-JZK:22} have been proposed for robustness
certification of deep equilibrium networks.

\section{Mathematical Preliminaries}

\paragraph{Matrices and functions.} Given a matrix $B \in \mathbb{R}^{n\times m}$, we denote the
non-negative part of $B$ by $[B]^+ = \max(B, 0)$ and the nonpositive
part of $B$ by $[B]^- = \min(B, 0)$. The Metzler part and the
non-Metzler part of square matrix $A\in \real^{n\times n}$ are
denoted by $\lceil A \rceil^{\mathrm{Mzl}}\in \real^{n\times n}$ and
$\lfloor A \rfloor^{\mathrm{Mzl}}\in \real^{n\times n}$, respectively,
where
\begin{align*}
  (\lceil A \rceil^{\mathrm{Mzl}})_{ij} &=\begin{cases}
    A_{ij} & A_{ij} \geq 0\; \mbox{or} \; i =  j\\
    0 & \mbox{otherwise,}
  \end{cases}\\ \lfloor A\rfloor^{\mathrm{Mzl}}&= A-\lceil A
        \rceil^{\mathrm{Mzl}}.
\end{align*}
For matrices $C\in \real^{n\times m}$ and
$D\in \real^{p\times q}$, the Kronecker product of $C$ and $D$ is
denoted by $C\otimes D$. For every $\eta\in \real^n_{>0}$, we denote the largest (smallest) component of $\eta$ by $\eta_{\max}$ ($\eta_{\min}$). Moreover, we define
the diagonal matrix $[\eta]\in \real^{n\times n}$ by
$[\eta]_{ii}=\eta_i$, for every $i\in \{1,\ldots,n\}$. For
$\eta\in \real^n_{>0}$, the diagonally weighted $\ell_{\infty}$-norm
is defined by $\|x\|_{\infty,[\eta]^{-1}}=\max_i|x_i|/\eta_i$, the
diagonally weighted $\ell_{\infty}$-matrix measure is defined by
$\mu_{\infty,[\eta]^{-1}}(A) =\max_{i \in \until{n}} A_{ii} + \sum_{j
  \neq i} \frac{\eta_j}{\eta_i} |A_{ij}|$. The $\ell_2$-matrix measure is also defined by $\mu_2(A) = \frac{1}{2}\lambda_{\max}(A+A^{\top})$, where $\lambda_{\max}$ denoted the largest eigenvalue. Let $f:\real^r\to \real^q$
be a locally Lipschitz map and $\mathcal{X}\subseteq \real^r$. The $\ell_{\infty}$-norm 
Lipschitz constant of $f$ over $\mathcal{X}$ is the smallest real
number $\Lip_{\infty}^{\mathcal{X}}(f)\in \real_{\ge 0}$ such that
\begin{align*}
  \|f(x)-f(y)\|_{\infty} \le \Lip_{\infty}^{\mathcal{X}}(f) \|x-y\|_{\infty}
\end{align*}
for every $x,y\in \mathcal{X}$. We denote the $\ell_{\infty}$-norm Lipschitz constant of
$f$ over $\real^r$ by $\Lip_{\infty}(f)$. Let $\OF:\real^n\times \real^m\to \real^n$ be a mapping, for every $\alpha\in (0,1]$, we define the $\alpha$-average map
$\OF_{\alpha}:\real^n\times \real^m\to \real^n$ by
$\OF_{\alpha}(x,u) = (1-\alpha)x+ \alpha \OF(x,u)$, for every $x\in \real^n$ and every $u\in \real^m$. 

\paragraph{Intervals and inclusion functions.} For every $x\le
\widehat{x}$, we define the interval $[x,\widehat{x}]=\setdef{y\in
  \real^n}{x\le y\le\widehat{x}}$. The subset $\mathcal{T}^n\subset
\real^{2n}$ is defined by $\mathcal{T}^{n} :=
\setdef{(x,\widehat{x})\in \real^{2n}}{x\le \widehat{x}}$.  Let
$f:\real^{r}\to \real^q$ be a mapping. Then $\OF=
\left[\begin{smallmatrix}\underline{\OF}\\ \overline{\OF}\end{smallmatrix}\right]:\mathcal{T}^r\to \real^{2q}$ is  an inclusion function for $f$, if, for every $x\le y \le \widehat{x}$, 
\begin{enumerate}
    \item  $\underline{\OF}(y,y)\ge \underline{\OF}(x,\widehat{x})$ and $\overline{\OF}(y,y)\le \overline{\OF}(x,\widehat{x})$; 
    \item $\underline{\OF}(x,x)=\overline{\OF}(x,x)=f(x)$. 
\end{enumerate}
If $\OF$ is an inclusion function for $f$, then it is easy to see that,
\begin{align}\label{eq:interval-bound}
    f([x,\widehat{x}])\subseteq [\underline{\OF}(x,\widehat{x}),\overline{\OF}(x,\widehat{x})],\quad \mbox{for all }x\le \widehat{x}.
\end{align}

\section{Reachability analysis of learning models}\label{sec:reach}

Given a nonlinear learning model with the input-output map
$f:\real^{r}\to \real^{q}$ and the input set
$\mathcal{X}\subseteq \real^r$, the reachable set of $f$ is given by
\begin{align*}
  \mathcal{Y} = f(\mathcal{X}) = \setdef{y\in \real^{q}}{y=f(x), x\in \mathcal{X}}
  \end{align*}
Many desirable properties of the learning model, such as robustness
and safety, can be presented as $\mathcal{Y}$ belonging to a specification set $S\subset \real^q$. However, verification of these specifications requires an exact computation
of the set $\mathcal{Y}$ which is usually complicated. In this section, we review two frameworks for
over-approximating the reachable sets of $f$ and study the connection
between these two settings.

\paragraph{Lipschitz constants.}
For the nonlinear learning model $f:\real^r\to \real^q$, the $\ell_{\infty}$-norm Lipschitz
constant $\Lip_{\infty}(f)$ provide the tightest $\ell_\infty$-norm
over-approximation of reachable set of $f$. We define the set
$\Omega =\setdef{x\in \real^r}{\frac{\partial f}{\partial x} \;\;
  \mbox{exists}}$. By Rademacher's theorem, the set $\real^r/\Omega$
is a measure zero set. As a result, one can compute the $\ell_{\infty}$-norm Lipschitz constant of $f$ using the following optimization problem:
\begin{align}\label{eq:tight-lip}
  \Lip_{\infty} (f) = \sup_{x\in \Omega} \|D f(x)\|_{\infty}
\end{align}

\paragraph{Inclusion functions.} The mapping $\OF=
\left[\begin{smallmatrix}\underline{\OF}\\
    \overline{\OF}\end{smallmatrix}\right]:\mathcal{T}^r\to
\real^{2q}$ is \emph{tight inclusion function} for $f$, if, for every other inclusion function $\OG= 
    \left[\begin{smallmatrix}\underline{\OG}\\ \overline{\OG}\end{smallmatrix}\right]:\mathcal{T}^r\to \real^{2q}$ of $f$, we have
    \begin{align*}
        \underline{\OG}(x,\widehat{x})\le \underline{\OF}(x,\widehat{x}),\quad
        \overline{\OF}(x,\widehat{x})\ge \overline{\OG}(x,\widehat{x}),\quad\mbox{for all  }x\le \widehat{x}
    \end{align*} 
The tight inclusion function $\OF$ provides the tightest box
over-approximation of reachable sets of $f$. It is easy to see that if $\OF =\left[\begin{smallmatrix}\underline{\OF}\\ \overline{\OF}\end{smallmatrix}\right]$ is the tight inclusion function for $f$, then it can be computed using the following optimization problem, for every $i\in \{1,\ldots,n\}$:
\begin{align}\label{eq:tight-inclusion}
    \underline{\OF}_i(x,\widehat{x})= \min_{z\in [x,\widehat{x}]} f_i(z),\quad
    \overline{\OF}_i(x,\widehat{x})=\max_{z\in [x, \widehat{x}]} f_i(z)
\end{align}

The next theorem shows that, compared to Lipschitz constants, tight
inclusion functions provide sharper estimates of reachable sets. 
   
\begin{theorem}[Inclusion function vs. Lipschitz constant]\label{thm:LipInc}
Let $f:\real^r\to \real^q$ be a continuous mapping and $\OF=\left[\begin{smallmatrix}\underline{\OF}\\ \overline{\OF}\end{smallmatrix}\right]:\mathcal{T}^r\to \real^{2q}$ be the tight inclusion function for $f$. Then, for every $\underline{x}\le \overline{x}$, we have 
\begin{equation*}
    \|\overline{\OF}(\underline{x},\overline{x})-\underline{\OF}(\underline{x},\overline{x})\|_{\infty} \le \Lip^{[\underline{x},\overline{x}]}_{\infty}(f) \|\underline{x}-\overline{x}\|_{\infty}.
\end{equation*}
\end{theorem}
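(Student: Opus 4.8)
The plan is to reduce the claimed $\ell_{\infty}$ inequality to a coordinatewise scalar estimate, using the explicit variational formula \eqref{eq:tight-inclusion} for the tight inclusion function together with compactness of the box $[\underline{x},\overline{x}]$.

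First I would use continuity of $f$ and compactness of $[\underline{x},\overline{x}]$ to guarantee, via the extreme value theorem, that each optimization problem in \eqref{eq:tight-inclusion} attains its optimum: for each coordinate $i\in\{1,\ldots,q\}$ there exist $z_i^-, z_i^+\in[\underline{x},\overline{x}]$ with $\underline{\OF}_i(\underline{x},\overline{x}) = f_i(z_i^-)$ and $\overline{\OF}_i(\underline{x},\overline{x}) = f_i(z_i^+)$; in particular $\overline{\OF}_i(\underline{x},\overline{x}) - \underline{\OF}_i(\underline{x},\overline{x}) = f_i(z_i^+) - f_i(z_i^-) \ge 0$. This step is what turns a statement about suprema and infima into one about actual function values, to which the Lipschitz bound applies directly.

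Next I would chain the Lipschitz estimate on the box with the observation that $z_i^+$ and $z_i^-$ both lie in $[\underline{x},\overline{x}]$. Assuming $\Lip_{\infty}^{[\underline{x},\overline{x}]}(f) < \infty$ (otherwise the claim is vacuous), for each coordinate $i$,
\begin{align*}
0 \le \overline{\OF}_i(\underline{x},\overline{x}) - \underline{\OF}_i(\underline{x},\overline{x}) &= f_i(z_i^+) - f_i(z_i^-) \le \|f(z_i^+) - f(z_i^-)\|_{\infty}\\
&\le \Lip_{\infty}^{[\underline{x},\overline{x}]}(f)\,\|z_i^+ - z_i^-\|_{\infty} \le \Lip_{\infty}^{[\underline{x},\overline{x}]}(f)\,\|\overline{x} - \underline{x}\|_{\infty},
\end{align*}
where the last inequality holds because $\underline{x}_j \le (z_i^+)_j,\,(z_i^-)_j \le \overline{x}_j$ forces $|(z_i^+)_j - (z_i^-)_j| \le \overline{x}_j - \underline{x}_j \le \|\overline{x}-\underline{x}\|_{\infty}$ for every $j$. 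Finally, since every entry of $\overline{\OF}(\underline{x},\overline{x}) - \underline{\OF}(\underline{x},\overline{x})$ is nonnegative, $\|\overline{\OF}(\underline{x},\overline{x}) - \underline{\OF}(\underline{x},\overline{x})\|_{\infty} = \max_i\big(\overline{\OF}_i(\underline{x},\overline{x}) - \underline{\OF}_i(\underline{x},\overline{x})\big)$, and taking the maximum over $i$ in the displayed chain yields the claim.

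I do not expect a genuine obstacle: the argument is essentially the triangle inequality plus coordinatewise containment in the box. The only points deserving a sentence of care are (i) invoking compactness so that the extremizers $z_i^\pm$ genuinely exist, rather than arguing through approximate minimizers and a limiting procedure, and (ii) the degenerate case in which $f$ restricted to the box is not Lipschitz, where $\Lip_{\infty}^{[\underline{x},\overline{x}]}(f) = +\infty$ and the inequality is trivially true.
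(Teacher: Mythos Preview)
Your proposal is correct and follows essentially the same approach as the paper: use compactness of $[\underline{x},\overline{x}]$ to realize the tight inclusion values as actual function evaluations $f_i(z_i^+),f_i(z_i^-)$, bound $|f_i(z_i^+)-f_i(z_i^-)|$ by $\Lip_{\infty}^{[\underline{x},\overline{x}]}(f)\,\|z_i^+-z_i^-\|_{\infty}$, and then use containment in the box to bound $\|z_i^+-z_i^-\|_{\infty}$ by $\|\overline{x}-\underline{x}\|_{\infty}$. The only cosmetic difference is that the paper first fixes the coordinate $i$ attaining the $\ell_{\infty}$-norm and argues once, whereas you bound every coordinate and then take the maximum; the content is identical.
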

\begin{proof} Let $i\in \{1,\ldots,k\}$ be such that $\|\overline{\OF}(\underline{x},\overline{x})-\underline{\OF}(\underline{x},\overline{x})\|_{\infty} = \left|\overline{\OF}_i(\underline{x},\overline{x})-\underline{\OF}_i(\underline{x},\overline{x})\right|$. 
Note that since $f$ is continuous and the box $[\underline{x},\overline{x}]$ is compact, there exist $\eta^*,\xi^*\in [\underline{x},\overline{x}]$ such that 
\begin{align*}
    \max_{y\in [\underline{x},\overline{x}]}f_i(y) =f_i(\eta^*), \qquad\min_{y\in [\underline{x},\overline{x}]}f_i(y)=f_i(\xi^*).
\end{align*}
This implies that $\|\overline{\OF}(\underline{x},\overline{x})-\underline{\OF}(\underline{x},\overline{x})\|_{\infty}=
  |f_i(\eta^*)-f_i(\xi^*)| = \|f(\xi^*)-f(\eta^*)\|_{\infty} $ and thus
\begin{align*}
  \|\overline{\OF}(\underline{x},\overline{x})-\underline{\OF}(\underline{x},\overline{x})\|_{\infty} &\le 
   \Lip^{[\underline{x},\overline{x}]}_{\infty}\|\xi^*-\eta^*\|_{\infty}
   \\ &\le \Lip^{[\underline{x},\overline{x}]}_{\infty}(f) \|\underline{x}-\overline{x}\|_{\infty}.
\end{align*}
\end{proof}
\vspace{-1cm}

\section{Implicit neural networks}

Given $W\in\real^{n\times{n}}$, $U\in\real^{n\times{r}}$, $b\in \real^{n}$
$C\in\real^{q\times{n}}$, and $c\in\real^{q}$, we consider
the implicit neural network
\begin{align}
  \label{eq:inn}
  z&=\Phi(Wz+Ux+b) := \ON(z,x)\nonumber, \\  y &= Cz + c,
\end{align}
where $z\in\real^n$, $x\in\real^r$, $y\in\real^{q}$, and $\map{\Phi}{\real^n}{\real^n}$
is defined by $\Phi(z) =
(\phi_1(z_1),\ldots,\phi_n(z_n))$.
For every $i\in \{1,\ldots,n\}$, we assume the activation function
$\map{\phi_i}{\real}{\real}$ is weakly increasing, i.e., 
$\phi_i(x_i)\geq\phi_i(z_i)$ for $x_i\geq z_i$, and non-expansive, i.e.,
$|\phi_i(x_i)-\phi_i(z_i)|\leq|x_i-z_i|$ for all $x_i$ and $z_i$; if
$\phi_i$ is differentiable, these conditions are equivalent to
$0\leq\phi_i'(x_i)\leq1$ for all $x_i\in\real$. The above
two assumptions holds for a large class of activation function
including but not limited to ReLU, leakyReLU, tanh, and sigmoid
functions~\citep{LEG-FG-BT-AA-AYT:21}. It is known that implicit neural networks can be ill-posed and can suffer from convergence instability. The next theorem provides a sufficient condition for well-posedness of the implicit neural network~\eqref{eq:inn}. We refer
to~\citep[Theorem 3]{SJ-AD-AVP-FB:21f} for the proof. 

\begin{theorem}[Well-posedness and computation of fixed points]
  \label{thm:inn-well-posed}
 Consider the implicit neural network~\eqref{eq:inn}. Given a vector $\eta\in \real^n_{>0}$, if
 $\mu_{\infty,[\eta]^{-1}}(W)<1$ holds, then the following statements
 are true:
 \begin{enumerate}
  \item\label{fact:wellposed} the fixed point equation~\eqref{eq:inn} is well-posed, i.e., there exists a unique fixed
    point,

  \item\label{p2:average} for every $\alpha\in (0,
 \big(1-\min_{i\in\until{n}}[W_{ii}]^-\big)^{-1}]$, the
 $\alpha$-average iteration $z^{k+1}=\ON_\alpha(z^k)$ is contracting with
    respect to the norm $\|\cdot\|_{\infty,[\eta]^{-1}}$ and converges
    to the unique fixed point of the implicit neural networks~\eqref{eq:inn}. 
  \end{enumerate}
\end{theorem}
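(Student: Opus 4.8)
The plan is to prove item~\ref{p2:average} first --- that the $\alpha$-average map $\ON_\alpha(\cdot)$ is a Banach contraction on the complete metric space $(\real^n,\|\cdot\|_{\infty,[\eta]^{-1}})$ for every admissible $\alpha$ --- and then to obtain item~\ref{fact:wellposed} as an immediate corollary, since a contraction has a unique fixed point and the fixed points of $\ON_\alpha(\cdot)$ coincide with those of $\ON(\cdot,x)$.

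Fix the input $x$ and write $G(z):=\ON(z,x)=\Phi(Wz+Ux+b)$. The first step is the elementary observation that, because each $\phi_i$ is weakly increasing and non-expansive, for any $z,z'\in\real^n$ there is a diagonal matrix $D=D(z,z')=\diag(d_1,\dots,d_n)$ with every $d_i\in[0,1]$ such that $\Phi(Wz+Ux+b)-\Phi(Wz'+Ux+b)=DW(z-z')$: setting $a=Wz+Ux+b$ and $a'=Wz'+Ux+b$, take $d_i=(\phi_i(a_i)-\phi_i(a_i'))/(a_i-a_i')$ when $a_i\neq a_i'$ (monotonicity forces $d_i\ge 0$, non-expansiveness forces $d_i\le 1$) and $d_i=0$ otherwise --- this handles non-differentiable activations such as ReLU without modification. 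Hence $\ON_\alpha(z)-\ON_\alpha(z')=M_\alpha(z-z')$ with $M_\alpha:=(1-\alpha)I+\alpha DW$.

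The crux is to bound the induced weighted norm $\|M_\alpha\|_{\infty,[\eta]^{-1}}$ by a constant strictly below $1$, uniformly over all admissible $D$. Writing the induced norm out row by row gives $\|M_\alpha\|_{\infty,[\eta]^{-1}}=\max_{i\in\until{n}}\big(|(1-\alpha)+\alpha d_iW_{ii}|+\alpha d_i\sum_{j\neq i}\tfrac{\eta_j}{\eta_i}|W_{ij}|\big)$. A short case analysis ($W_{ii}\ge 0$ versus $W_{ii}<0$, using the worst case $d_i=1$ in the latter) shows that the hypothesis $\alpha\le\big(1-\min_i[W_{ii}]^-\big)^{-1}$, which in particular forces $\alpha\le 1$, guarantees $(1-\alpha)+\alpha d_iW_{ii}\ge 0$ for every $i$ and every $d_i\in[0,1]$. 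Dropping the absolute value and regrouping the diagonal term with the off-diagonal row sum yields $\|M_\alpha\|_{\infty,[\eta]^{-1}}=\max_i\big((1-\alpha)+\alpha d_i\rho_i\big)$ where $\rho_i:=W_{ii}+\sum_{j\neq i}\tfrac{\eta_j}{\eta_i}|W_{ij}|\le\mu_{\infty,[\eta]^{-1}}(W)$. Since $t\mapsto(1-\alpha)+\alpha t\rho_i$ is affine, its maximum over $t\in[0,1]$ is $\max\{1-\alpha,\,(1-\alpha)+\alpha\rho_i\}\le\max\{1-\alpha,\,1-\alpha(1-\mu_{\infty,[\eta]^{-1}}(W))\}=:\lambda$, and $\lambda<1$ because $\alpha>0$ and $\mu_{\infty,[\eta]^{-1}}(W)<1$. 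As $\lambda$ is independent of $z,z'$, we conclude $\|\ON_\alpha(z)-\ON_\alpha(z')\|_{\infty,[\eta]^{-1}}\le\lambda\,\|z-z'\|_{\infty,[\eta]^{-1}}$, i.e.\ $\ON_\alpha$ is a contraction.

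To finish, I would apply the Banach fixed point theorem on $(\real^n,\|\cdot\|_{\infty,[\eta]^{-1}})$: $\ON_\alpha$ has a unique fixed point $z^\star$ and $z^{k+1}=\ON_\alpha(z^k)$ converges to it geometrically, establishing item~\ref{p2:average}. For item~\ref{fact:wellposed}, note $z=\ON_\alpha(z)$ is equivalent to $\alpha(\ON(z,x)-z)=0$, hence to $z=\ON(z,x)$ since $\alpha>0$; as the admissible range of $\alpha$ is nonempty (e.g.\ $\alpha=(1-\min_i[W_{ii}]^-)^{-1}>0$), the implicit equation~\eqref{eq:inn} has the unique solution $z^\star$. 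I expect the main obstacle to be the operator-norm estimate: one must exploit the sign condition on the diagonal of $M_\alpha$ --- precisely what restricts the range of $\alpha$ --- to move the diagonal contribution out of the absolute value and fuse it with the off-diagonal magnitudes, which is exactly the step that converts an induced-norm bound into the matrix-measure condition $\mu_{\infty,[\eta]^{-1}}(W)<1$; a secondary subtlety is that $D$ depends on $(z,z')$, so the contraction factor must be shown uniform over all diagonal $D$ with entries in $[0,1]$.
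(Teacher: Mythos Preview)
The paper does not actually prove this theorem in-text; it merely cites \citep[Theorem~3]{SJ-AD-AVP-FB:21f} for the proof, so there is no in-paper argument to compare against. Your proposal is correct and is precisely the natural Banach--contraction argument one expects the cited reference to contain: represent $\Phi(a)-\Phi(a')=D(a-a')$ with a diagonal $D$ having entries in $[0,1]$, write $\ON_\alpha(z)-\ON_\alpha(z')=\big((1-\alpha)I+\alpha DW\big)(z-z')$, use the step-size restriction $\alpha\le\big(1-\min_i[W_{ii}]^-\big)^{-1}$ to make every diagonal entry $(1-\alpha)+\alpha d_iW_{ii}$ nonnegative, and then collapse the induced weighted $\ell_\infty$-norm row sums into $(1-\alpha)+\alpha d_i\rho_i$ with $\rho_i\le\mu_{\infty,[\eta]^{-1}}(W)$, obtaining the uniform contraction factor $\max\{1-\alpha,\,1-\alpha(1-\mu_{\infty,[\eta]^{-1}}(W))\}<1$. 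The deduction of item~\ref{fact:wellposed} from item~\ref{p2:average} via the equivalence of fixed points of $\ON_\alpha$ and $\ON$ is also correct. Both subtleties you flag --- the sign of the diagonal (which is exactly what the $\alpha$-range enforces) and the uniformity of the contraction factor over all admissible $D$ --- are handled properly.
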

\begin{remark}[Comparison to the literature]
  \begin{enumerate}
  \item In~\citep{LEG-FG-BT-AA-AYT:21} a well-posedness condition of
    the form $\lambda_{\mathrm{pf}}(|W|)<1$ is proposed, where $|W|$
    denotes the entrywise absolute value of the matrix $W$ and
    $\lambda_{\mathrm{pf}}$ denotes the Perron-Frobenius
    eigenvalue. Our well-posedness condition in
    Theorem~\ref{thm:inn-well-posed}\eqref{fact:wellposed} is less
    conservative than the condition $\lambda_{\mathrm{pf}}(|W|)<1$ and
    its convex relaxation of the form $\|W\|_{\infty}<1$ proposed
    in~\citep{LEG-FG-BT-AA-AYT:21}.
    
  \item In~\citep{EW-JZK:20} a framework based on
    Monotone Operator Theory is developed for studying implicit neural
    network~\eqref{eq:inn} with well-posedness condition
    $I_n-\tfrac{1}{2}(W+W^{\top})\succeq (1-\gamma)I_n$. In the context of contraction theory~\citep{WL-JJES:98},
    \begin{equation*}
      I_n-\tfrac{1}{2}(W+W^{\top})\succeq (1-\gamma)I_n \; \iff \; \mu_2(W)\le \gamma.
    \end{equation*}
    We refer to~\citep{SJ-AD-AVP-FB:21f} for the proof. Thus, our
    framework can be considered as the non-Euclidean version of the
    setting in~\citep{EW-JZK:20}. 
    
    
  \end{enumerate}
\end{remark}

\section{Robustness of implicit neural networks}

In this section, we study input-output robustness of the implicit
neural network~\eqref{eq:inn} using the reachability frameworks of
Section~\ref{sec:reach} for the input-output map $f_{\ON}:\real^{r}\to \real^{q}$:
\begin{align}\label{eq:input-output}
f_{\ON}(x) : = C z^*_x + c   
\end{align}
where $z^*_x\in \real^n$ satisfies $z^*_x = \ON(z^*_x,x)$. 

\paragraph{Robustness via Lipschitz bounds.} We use
the Lipschitz constant framework in Section~\ref{sec:reach} to study reachable
sets of implicit neural networks. Finding the Lipschitz constant of the input-output map $f_{\ON}$
requires solving the optimization problem~\eqref{eq:tight-lip} which
is computationally intractable for large-scale networks. In
the next theorem, we provide an upper bound for this Lipschitz
constant and use it to over-approximate the
reachable set of the neural network. We refer to~\cite{SJ-AD-AVP-FB:21f} for the proof.

\begin{theorem}[Input-output Lipschitz bounds]\label{thm:lip}
  Consider the implicit neural network~\eqref{eq:inn} with the
  input-output map $f_{\ON}$ defined in~\eqref{eq:input-output}. Let $\eta\in \real^n_{>0}$ be such that
  $\mu_{\infty,[\eta]^{-1}}(W)<1$:

  \begin{enumerate}
      \item\label{p1:bound} the Lipschitz constant
        of $f_{\ON}$ is bounded by:
        \begin{align*}
          \Lip_{\infty}(f_{\ON})\le \left(\frac{\eta_{\max}}{\eta_{\min}}\right)\frac{\norm{U}{\infty}
        \norm{C}{\infty}}
          {1-\mu_{\infty,[\eta]^{-1}}(W)^+}
          \end{align*}
      \item\label{p2:interval} for every $x,x'\in \real^r$, by denoting $\xi:=\left(\tfrac{\eta_{\max}}{\eta_{\min}}\right)\frac{\norm{U}{\infty}
        \norm{C}{\infty} }
          {1-\mu_{\infty,[\eta]^{-1}}(W)^+}\|x-x'\|_{\infty} \in
          \real_{>0}$, we have
        \begin{align*}
         f_{\ON}(x') \in 
        [f_{\ON}(x)-\xi \vect{1}_q, f_{\ON}(x)
          +\xi \vect{1}_q]. 
          \end{align*}
    \end{enumerate}
  \end{theorem}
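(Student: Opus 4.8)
The plan is to reduce the statement to the fixed-point equation and to standard matrix-measure estimates; part~\eqref{p2:interval} will then be an immediate corollary of part~\eqref{p1:bound}. By Theorem~\ref{thm:inn-well-posed}, the hypothesis $\mu_{\infty,[\eta]^{-1}}(W)<1$ makes $f_{\ON}$ well defined, so fix arbitrary $x,x'\in\real^r$ and let $z:=z^*_x$, $z':=z^*_{x'}$, so that $z=\Phi(Wz+Ux+b)$ and $z'=\Phi(Wz'+Ux'+b)$. Writing $a:=Wz+Ux+b$, $a':=Wz'+Ux'+b$ and setting, for each $i$, $d_i:=\frac{\phi_i(a_i)-\phi_i(a_i')}{a_i-a_i'}$ when $a_i\neq a_i'$ and $d_i:=0$ otherwise, the assumptions that each $\phi_i$ is weakly increasing and non-expansive give $d_i\in[0,1]$ and $\Phi(a)-\Phi(a')=D(a-a')$ with $D:=\diag(d_1,\dots,d_n)$. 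Subtracting the two fixed-point identities yields
\begin{align*}
  (I_n-DW)(z-z')=D\,U(x-x').
\end{align*}

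The core step is to bound $(I_n-DW)^{-1}$ in the induced norm $\|\cdot\|_{\infty,[\eta]^{-1}}$. I claim $\mu_{\infty,[\eta]^{-1}}(DW)\le\mu_{\infty,[\eta]^{-1}}(W)^+$: the $i$-th weighted row quantity of $DW$ equals $d_i\big(W_{ii}+\sum_{j\neq i}\tfrac{\eta_j}{\eta_i}|W_{ij}|\big)$, and multiplying by $d_i\in[0,1]$ cannot increase a nonnegative row quantity and cannot push a negative one past zero, so it is $\le\mu_{\infty,[\eta]^{-1}}(W)^+$; the claim follows on taking the maximum over $i$. Since $\mu_{\infty,[\eta]^{-1}}(W)<1$ forces $\mu_{\infty,[\eta]^{-1}}(W)^+<1$, the translation property of matrix measures gives $\mu_{\infty,[\eta]^{-1}}(DW-I_n)=\mu_{\infty,[\eta]^{-1}}(DW)-1<0$. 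Invoking the standard fact that $\mu(A)<0$ implies $A$ invertible with $\|A^{-1}\|\le-1/\mu(A)$ (for the induced matrix norm associated with $\mu$), we conclude that $I_n-DW$ is invertible and
\begin{align*}
  \big\|(I_n-DW)^{-1}\big\|_{\infty,[\eta]^{-1}}\le\frac{1}{1-\mu_{\infty,[\eta]^{-1}}(W)^+}.
\end{align*}

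To conclude part~\eqref{p1:bound} I would combine this with the facts that left multiplication by $D$ is non-expansive in every weighted $\ell_\infty$-norm (as $|d_i|\le1$) and that $\|v\|_{\infty,[\eta]^{-1}}\le\eta_{\min}^{-1}\|v\|_\infty$ and $\|v\|_\infty\le\eta_{\max}\|v\|_{\infty,[\eta]^{-1}}$. Chaining these through the identity above gives
\begin{align*}
  \|z-z'\|_{\infty,[\eta]^{-1}}\le\frac{\|D\,U(x-x')\|_{\infty,[\eta]^{-1}}}{1-\mu_{\infty,[\eta]^{-1}}(W)^+}\le\frac{\norm{U}{\infty}\,\|x-x'\|_\infty}{\eta_{\min}\big(1-\mu_{\infty,[\eta]^{-1}}(W)^+\big)},
\end{align*}
and then $\|f_{\ON}(x)-f_{\ON}(x')\|_\infty=\|C(z-z')\|_\infty\le\norm{C}{\infty}\,\eta_{\max}\,\|z-z'\|_{\infty,[\eta]^{-1}}$ yields exactly the claimed Lipschitz bound. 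Part~\eqref{p2:interval} is then immediate: with $\xi$ as defined, part~\eqref{p1:bound} gives $|f_{\ON}(x')_i-f_{\ON}(x)_i|\le\xi$ for each $i$, i.e. $f_{\ON}(x')\in[f_{\ON}(x)-\xi\vect{1}_q,\,f_{\ON}(x)+\xi\vect{1}_q]$.

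The step I expect to require the most care is the core estimate — verifying $\mu_{\infty,[\eta]^{-1}}(DW)\le\mu_{\infty,[\eta]^{-1}}(W)^+$ (the superscript $+$ is essential and records precisely that a diagonal contraction cannot carry a negative weighted row sum below zero) together with correctly invoking the matrix-measure bound on the inverse. An alternative that avoids the inverse bound is to apply Theorem~\ref{thm:inn-well-posed}\eqref{p2:average}: the $\alpha$-average iteration $z^{k+1}=\ON_\alpha(z^k)$ contracts uniformly in the input with rate $1-\alpha\big(1-\mu_{\infty,[\eta]^{-1}}(W)^+\big)$, so comparing the iterations generated from $x$ and from $x'$ and letting $k\to\infty$ recovers the same bound on $\|z^*_x-z^*_{x'}\|_{\infty,[\eta]^{-1}}$; the direct argument above is shorter.
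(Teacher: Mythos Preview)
Your proof is correct. The paper itself does not prove this theorem; it only cites~\cite{SJ-AD-AVP-FB:21f}, so there is no in-paper argument to compare against line by line. That said, the cited reference develops the result through the non-Euclidean contraction viewpoint --- essentially the alternative you sketch at the end, where the $\alpha$-average map $\ON_\alpha$ is shown to be a contraction in $\|\cdot\|_{\infty,[\eta]^{-1}}$ with a rate governed by $\mu_{\infty,[\eta]^{-1}}(W)^+$, and the input-to-fixed-point Lipschitz bound follows from the standard incremental input-to-state estimate for contractions.

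Your primary route is a bit different and arguably more direct: you linearize the activation via the diagonal slope matrix $D\in[0,1]^{n\times n}$, reduce to $(I_n-DW)(z-z')=DU(x-x')$, and then control $\|(I_n-DW)^{-1}\|_{\infty,[\eta]^{-1}}$ via the matrix-measure inverse bound. The key inequality $\mu_{\infty,[\eta]^{-1}}(DW)\le\mu_{\infty,[\eta]^{-1}}(W)^+$ is verified exactly as you say (the $i$-th weighted row quantity of $DW$ is $d_i$ times that of $W$, and $d_i\in[0,1]$), and your observation that the superscript~$+$ is forced because some $d_i$ may vanish is precisely the point. The passage from $\|\cdot\|_{\infty,[\eta]^{-1}}$ to $\|\cdot\|_\infty$ via the factors $\eta_{\max}$ and $\eta_{\min}^{-1}$ is also handled correctly. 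Both approaches yield the same constant; yours avoids iterating and taking limits, while the contraction-based one makes the dynamical-systems interpretation (emphasized throughout the paper) more explicit.
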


  \begin{remark}[Comparison with the literature]
    \begin{enumerate}
  \item In~\citep{LEG-FG-BT-AA-AYT:21}, the following upper bound for the tight $\ell_{\infty}$-norm
    input-output Lipschitz constant of the implicit neural
    network~\eqref{eq:inn} is obtained:
    \begin{align*}
      \Lip_{\infty}(f_{\ON}) \le \frac{\|U\|_{\infty}\|C\|_{\infty}}{1-\|W\|_{\infty}}
    \end{align*}
    Since $\mu_{\infty}(W) \le \|W\|_{\infty}$, for every $W\in
    \real^{n\times n}$, we can conclude that, compared to~\citep{LEG-FG-BT-AA-AYT:21}, Theorem~\ref{thm:lip}\eqref{p1:bound}
    provides a sharper estimate for the Lipschitz constant of
    implicit neural network~\eqref{eq:inn}

  \item In~\citep{CP-EW-JZK:21} upper bounds for the $\ell_2$-norm input-output
    Lipschitz constant of 
    implicit neural network~\eqref{eq:inn} are obtained. However, these estimates
    are restricted to implicit neural networks with ReLU activation
    functions and cannot be extended to more general classes of
    activation functions.
  \end{enumerate}
   \end{remark}

  \paragraph{Robustness via inclusion functions.} Next, we use the
  inclusion function framework in Section~\ref{sec:reach} to study
  reachable sets of the implicit neural network~\eqref{eq:inn}. Finding tight inclusion
  function of the input-output map $f_{\ON}$ requires solving the
  optimization problem~\eqref{eq:tight-inclusion} which is 
  computationally intractable for large-scale networks. In this
  section, we provide an upper
  bound for this tight inclusion function.

  We first introduce the
  embedded implicit neural network associated
  with~\eqref{eq:inn}. Given $\underline{x}\le\overline{x}$ in
  $\real^r$, we define the \emph{embedded implicit neural network} by
  \begin{align}\label{eq:INN-embedding}
        \begin{bmatrix}\underline{x}\\\overline{x}\end{bmatrix}
        &= \begin{bmatrix}\Phi(\lceil W \rceil^{\mathrm{Mzl}} \underline{z}+\lfloor W \rfloor^{\mathrm{Mzl}} \overline{z} +
          [U]^{+}\underline{x} + [U]^{-}\overline{x} + b)\\ \Phi(\lceil W \rceil^{\mathrm{Mzl}}\overline{z}+\lfloor W \rfloor^{\mathrm{Mzl}}
          \underline{z} + [U]^{+}\overline{x} + [U]^{-}\underline{x}+b)\end{bmatrix}\nonumber\\
        \begin{bmatrix}\underline{y}\\\overline{y}\end{bmatrix} &=
        \begin{bmatrix}[C]^+ & [C]^- \\ [C]^- & [C]^+\end{bmatrix}  \begin{bmatrix}\underline{x}\\\overline{x}\end{bmatrix} +
        \begin{bmatrix}c\\ c\end{bmatrix}. 
  \end{align}
  We also define $\ON^{\mathrm{E}}:\real^{2n}\times \real^{2r}\to \real^n$ by
  \begin{multline*}
    \ON^{\mathrm{E}}(z,\widehat{z},x,\widehat{x}) := \\ \Phi(\lceil W \rceil^{\mathrm{Mzl}} z+\lfloor W \rfloor^{\mathrm{Mzl}} \widehat{z} +
          [U]^{+}x + [U]^{-}\widehat{x} + b).
    \end{multline*}
      Intuitively, the embedded implicit neural
      network~\eqref{eq:INN-embedding} is an implicit neural network with the
      box input $[\underline{x}, \overline{x}]$ and the box output
      $[\underline{y},\overline{y}]$. Surprisingly, one can show the sufficient
      condition for well-posedness of the implicit neural
      network~\eqref{eq:inn} in Theorem~\ref{thm:inn-well-posed} is also a sufficient condition for well-posedness of the embedded
      implicit neural network~\eqref{eq:INN-embedding}. In the next
      theorem, we study the connection between robustness of the implicit neural
      network~\eqref{eq:inn} and  reachability of the embedded implicit neural
      network~\eqref{eq:INN-embedding}. We refer to~\citep[Theorem 1]{SJ-MA-AD-FB-SC:21y} for the proof. 



\begin{theorem}[Input-output inclusion function]\label{thm:INN}
  Consider the implicit neural network~\eqref{eq:inn}. Let $\eta\in \real^n_{>0}$ is such that $\mu_{\infty,[\eta]^-1}(W)<1$. Then, for $\alpha\in
  (0, (1-\min_{i\in \{1,\ldots,n\}}(W_{ii})^{-})^{-1}]$, the following
  statements hold:
  \begin{enumerate}
  \item\label{p1:L4DC} the $\alpha$-average iterations
    $ \left[\begin{smallmatrix}\underline{z}^{k+1} \\
        \overline{z}^{k+1}\end{smallmatrix}\right] =
    \left[\begin{smallmatrix}\ON^{\mathrm{E}}_{\alpha}(\underline{z}^k,\overline{z}^k,\underline{x},\overline{x})\\
        \ON^{\mathrm{E}}_{\alpha}(\overline{z}^k,\underline{z}^k,\overline{x},\underline{x})\end{smallmatrix}\right]$
    is contracting with respect to the norm
    $\|\cdot\|_{\infty,I_2\otimes [\eta]^{-1}}$ and converge to the
    unique fixed point $\begin{bmatrix}
      \underline{z}^*\\
      \overline{z}^*
    \end{bmatrix}$ of the embedded implicit neural network~\eqref{eq:INN-embedding};
  \item\label{p2:L4DC} the $\alpha$-average iterations $z^{k+1}=\ON_{\alpha}(z^k,x)$
    is contracting with respect to the norm
    $\|\cdot\|_{\infty,[\eta]^{-1}}$ and converges to the unique fixed
    point $z^*\in [\underline{z}^*,\overline{z}^*]$ of the
    implicit neural network~\eqref{eq:inn};
  \item\label{p3:L4DC} for the tight inclusion function $\OF_{\ON}=\left[\begin{smallmatrix}
        \underline{\OF}_{\ON}\\
        \overline{\OF}_{\ON}
          \end{smallmatrix}\right]:\mathcal{T}^{r}\to \real^{2q}$ of
        the input-output map $f_{\ON}$ defined by equation~\eqref{eq:input-output}, we have 
          \begin{align*}
            \underline{\OF}_{\ON}(\underline{x},\overline{x}) &\ge
                                                                [C]^{+}\underline{z}^*+[C]^{-}\overline{z}^*+c :=
             \underline{\OG}_{\ON}(\underline{x},\overline{x})\\
            \overline{\OF}_{\ON}(\underline{x},\overline{x}) &\le
                                                               [C]^{+}\overline{z}^*+[C]^{-}\underline{z}^*+c
                                                               := \overline{\OG}_{\ON}(\underline{x},\overline{x})
          \end{align*}
        \item for every $x\in [\underline{x},\overline{x}]$, we have
          \begin{align*}
            f_{\ON}(x)\in
            [\underline{\OG}_{\ON}(\underline{x},\overline{x}), \overline{\OG}_{\ON}(\underline{x},\overline{x})].
            \end{align*}         
        \end{enumerate}
      \end{theorem}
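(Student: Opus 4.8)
The plan is to recognize the embedded network~\eqref{eq:INN-embedding} as an ordinary implicit neural network of the form~\eqref{eq:inn} on the doubled state $\zeta=(\underline z,\overline z)\in\real^{2n}$ and doubled input $(\underline x,\overline x)\in\real^{2r}$, with weight matrix $W^{\mathrm{E}}=\left[\begin{smallmatrix}\lceil W\rceil^{\mathrm{Mzl}} & \lfloor W\rfloor^{\mathrm{Mzl}}\\ \lfloor W\rfloor^{\mathrm{Mzl}} & \lceil W\rceil^{\mathrm{Mzl}}\end{smallmatrix}\right]$, the matching block input matrix, bias $(b,b)$, and activation $(\phi_1,\dots,\phi_n,\phi_1,\dots,\phi_n)$, which is again coordinatewise weakly increasing and non-expansive. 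The first step is the identity $\mu_{\infty,I_2\otimes[\eta]^{-1}}(W^{\mathrm{E}})=\mu_{\infty,[\eta]^{-1}}(W)$: for any row $(\sigma,i)$ of $W^{\mathrm{E}}$, the diagonal entry is $(\lceil W\rceil^{\mathrm{Mzl}})_{ii}=W_{ii}$, the same-block off-diagonal entries contribute the weighted sum $\sum_{j\ne i}\tfrac{\eta_j}{\eta_i}[W_{ij}]^+$, the opposite-block entries contribute $\sum_{j\ne i}\tfrac{\eta_j}{\eta_i}\big|[W_{ij}]^-\big|$ (the opposite-block diagonal entry $(\lfloor W\rfloor^{\mathrm{Mzl}})_{ii}$ vanishing), and $[W_{ij}]^+ + \big|[W_{ij}]^-\big| = |W_{ij}|$ collapses these into $\sum_{j\ne i}\tfrac{\eta_j}{\eta_i}|W_{ij}|$. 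Since moreover $\min_i(W^{\mathrm{E}})_{ii}=\min_i W_{ii}$, the admissible range of $\alpha$ is unchanged, so Theorem~\ref{thm:inn-well-posed} applied to $W^{\mathrm{E}}$ gives statement~\ref{p1:L4DC} (in particular well-posedness of~\eqref{eq:INN-embedding}), and applied to $W$ gives the contraction/convergence part of statement~\ref{p2:L4DC}.

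For the inclusion $z^*\in[\underline z^*,\overline z^*]$ in statement~\ref{p2:L4DC}, I would establish that the map $d:=\ON^{\mathrm{E}}_\alpha$ is, for $\alpha$ in the stated range, componentwise nondecreasing in $(z,x)$ and nonincreasing in $(\widehat z,\widehat x)$. Monotonicity in $x$, $\widehat x$ and $\widehat z$ is immediate from $[U]^+\ge 0$, $[U]^-\le 0$, $\lfloor W\rfloor^{\mathrm{Mzl}}\le 0$ with zero diagonal, and weak monotonicity of $\Phi$. The delicate case is monotonicity in $z$: raising $z_i$ by $\delta>0$ raises every pre-activation of index $j\ne i$ by $[W_{ji}]^+\delta\ge 0$, so those output coordinates do not decrease, while it shifts the $i$-th pre-activation by $W_{ii}\delta$, which may be negative; combining the $(1-\alpha)z_i$ term with the non-expansiveness of $\phi_i$, the net change of the $i$-th output coordinate is bounded below by $\delta\big(1-\alpha(1-\min_i(W_{ii})^-)\big)\ge 0$, precisely because $\alpha\le(1-\min_i(W_{ii})^-)^{-1}$. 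This is the only point where both the $\alpha$-restriction and the non-expansiveness of the activations are essential, so I expect it to be the main obstacle; it has to be checked coordinatewise with care about the sign of each $W_{ii}$.

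With this mixed-monotonicity in hand, a comparison argument finishes statement~\ref{p2:L4DC}. Observing that $d(z,z,x,x)=\ON_\alpha(z,x)$ for any $x$, I would start the embedded $\alpha$-average iteration from $(\underline z^0,\overline z^0)=(z^0,z^0)$ and the original $\alpha$-average iteration from $z^0$ with an arbitrary $x\in[\underline x,\overline x]$, and prove by induction that $\underline z^k\le z^k\le\overline z^k$ for all $k$: the inductive step follows by feeding the ordered triples $\underline z^k\le z^k\le\overline z^k$ and $\underline x\le x\le\overline x$ into $d$, placing the larger/smaller entry into the monotone-decreasing slot as appropriate. Both iterations converge by the first step, so passing to the limit yields $\underline z^*\le z^*_x\le\overline z^*$; in particular $\underline z^*\le\overline z^*$, so $[\underline z^*,\overline z^*]$ is a genuine interval.

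Finally, statement~\ref{p3:L4DC} and the last claim follow by combining $z^*_x\in[\underline z^*,\overline z^*]$ with the sign split $C=[C]^++[C]^-$: since $[C]^+\ge 0$ and $[C]^-\le 0$, for every $x\in[\underline x,\overline x]$ we get $[C]^+\underline z^*+[C]^-\overline z^*+c\le Cz^*_x+c=f_{\ON}(x)\le[C]^+\overline z^*+[C]^-\underline z^*+c$, which is exactly the last claim. Taking the infimum and supremum over $x\in[\underline x,\overline x]$ in the representation~\eqref{eq:tight-inclusion} of the tight inclusion function — legitimate because $z^*_x$, hence $f_{\ON}$, depends continuously on $x$ by the uniform contraction of $\ON_\alpha(\cdot,x)$ — then gives $\underline{\OG}_{\ON}(\underline x,\overline x)\le\underline{\OF}_{\ON}(\underline x,\overline x)$ and $\overline{\OF}_{\ON}(\underline x,\overline x)\le\overline{\OG}_{\ON}(\underline x,\overline x)$, i.e.\ statement~\ref{p3:L4DC}; alternatively, the last claim also follows from statement~\ref{p3:L4DC} via the interval-bound property~\eqref{eq:interval-bound}.
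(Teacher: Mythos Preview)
The paper itself does not prove this theorem; it simply defers to Theorem~1 of the cited companion paper, so there is no in-paper proof to compare against. Your proposal is correct and follows the natural route one would expect in that reference: recognize the embedded network~\eqref{eq:INN-embedding} as an instance of~\eqref{eq:inn} on $\real^{2n}$ with weight matrix $W^{\mathrm{E}}$, verify the key identity $\mu_{\infty,I_2\otimes[\eta]^{-1}}(W^{\mathrm{E}})=\mu_{\infty,[\eta]^{-1}}(W)$ (your row computation is right, in particular that $(\lfloor W\rfloor^{\mathrm{Mzl}})_{ii}=0$ so the opposite-block diagonal contributes nothing), note that the diagonal of $W^{\mathrm{E}}$ coincides with that of $W$ so the admissible $\alpha$-range is unchanged, and invoke Theorem~\ref{thm:inn-well-posed} for both networks. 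The ordering $\underline z^*\le z^*_x\le\overline z^*$ via mixed monotonicity of $\ON^{\mathrm{E}}_\alpha$ and induction along the iterates is the standard comparison argument; the coordinate you single out as delicate is indeed the only nontrivial one, and your estimate $(1-\alpha)\delta+\alpha[\phi_i(\text{new})-\phi_i(\text{old})]\ge\delta\big(1-\alpha(1-(W_{ii})^-)\big)\ge 0$ is exactly what is needed. Parts~\ref{p3:L4DC} and the last claim then follow from the sign split of $C$ as you write. One minor remark: the continuity of $f_{\ON}$ is not actually needed to pass from the pointwise bound to $\underline{\OF}_{\ON}\ge\underline{\OG}_{\ON}$, since a uniform lower bound survives the infimum regardless; but it does no harm.
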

      \begin{remark}
        \begin{enumerate}
        \item Theorem~\ref{thm:INN} (resp. Theorem~\ref{thm:inn-well-posed}) can be interpreted as a dynamical
          system approach to study robustness (resp. well-posedness) of implicit neural
          networks. Indeed, it is easy to see that the
          $\alpha$-average map $\ON^{\mathrm{E}}_{\alpha}$
          (resp. $\ON_{\alpha}$) is the forward Euler
          discretization of the dynamical system
          $\frac{d}{dt}\left[\begin{smallmatrix}\underline{x}\\
            \overline{x}\end{smallmatrix}\right] =
          -\left[\begin{smallmatrix}\underline{x}\\ \overline{x}\end{smallmatrix}\right] +
          \left[\begin{smallmatrix}
          \ON^{\mathrm{E}}(\underline{x},\overline{x},\underline{u},\overline{u})\\
          \ON^{\mathrm{E}}(\overline{x},\underline{x},\overline{u},\underline{u})
        \end{smallmatrix}\right]$ (resp. $\frac{dx}{dt}=-x+\ON(x,u)$). it is easy to see that the condition $\mu_{\infty,[\eta]^{-1}}(W)<1$ ensures that these dynamical systems are contracting with respect to $\|\cdot\|_{\infty,[\eta]^{-1}}$~\citep{WL-JJES:98}.

      \item In terms of evaluation time, computing the
        $\ell_{\infty}$-norm box bounds on the output is equivalent to
        two forward passes of the original implicit network.
      
      \item It can be shown that Implicit neural networks contain feedforward neural
        networks as a special
        case~\citep{LEG-FG-BT-AA-AYT:21}. Indeed, for a fully-connected feedforward
        neural network with $k$ layers and $n$ neurons in each layer,
        there exists an implicit network representation with block
        upper diagonal weight matrix $W\in \real^{kn\times
          kn}$~\citep[Section 3.2]{LEG-FG-BT-AA-AYT:21}. As a result, for small enough $\delta>0$ and
        for $\eta = (\delta, \delta^2 \ldots, \delta^{k})^{\top}\in
        \real^{k}_{>0}$, we have $\mu_{\infty,[\eta]^{-1}\otimes I_n}(W)<1$. In this case, the fixed point
        of the embedded implicit network~\eqref{eq:INN-embedding} is
        unique, can be computed explicitly using back-substitution, and corresponds exactly to
        the interval bound propagation approach in~\citep{SG-etal:18}.

      \item Motivated by the interval bound proportion approaches for
        robustness of feedforward neural networks (see for instance~\citep{SG-etal:18}), the following
        fixed point equation for estimating the output of the network
        is proposed in~\citep{CW-JZK:22}:
\begin{align*}
    \begin{bmatrix}
      \underline{x}\\
      \overline{x}
    \end{bmatrix} &= \begin{bmatrix}
      \Phi([W]^{+}\underline{z}+[W]^{-}\overline{z}+[U]^+\underline{x}
      + [U]^{-}\overline{u} + b)\\
      \Phi([W]^{+}\overline{z}+[W]^{-}\underline{z}+[U]^+\overline{x}
      + [U]^-\underline{x} + b)
    \end{bmatrix} , \\
    \begin{bmatrix}
      \underline{y}\\
      \overline{y}
    \end{bmatrix} &= \begin{bmatrix}
      [C]^{+} & [C]^{-}\\
      [C]^{-} & [C]^{+}
    \end{bmatrix}\begin{bmatrix}
      \underline{z}\\
      \overline{z}
    \end{bmatrix} + \begin{bmatrix}
      c\\
      c
    \end{bmatrix},     
\end{align*}
It is worth mentioning that the condition
$\mu_{\infty,[\eta]^{-1}}(W) < 1$ proposed in Theorem~\ref{thm:INN}
does not, in general, ensure well-posedness of the above fixed point
equation. Note that $ \lceil W \rceil^{\mathrm{Mzl}} \le [W]^{+}$ and
$[W]^-\le \lfloor W \rfloor^{\mathrm{Mzl}} $ for every $W\in
\real^{n\times n}$. As a result, compared to the inclusion
function $\begin{bmatrix}\underline{\OG}_{\ON}\\ \overline{\OG}_{\ON}\end{bmatrix}$ defined in Theorem~\ref{thm:INN}\eqref{p3:L4DC},
the above iteration provides a more
conservative estimate of the reachable sets.  
\end{enumerate}
\end{remark}

      \section{Training robust implicit neural networks}

      In this section, we design optimization problems for training implicit
      neural networks which are robust to input
      perturbations. Consider the implicit neural network~\eqref{eq:inn} and assume that
      $\{(\widehat{x}^l,\widehat{y}^l)\}_{l=1}^{N}$ is a set of $N$
      labeled data points used for training. For every
      $l\in \{1,\ldots,N\}$, we define the following upper and the
      lower bounds on the input $\widehat{x}^l$ by
      $\underline{x}^l=\widehat{x}^l-\epsilon \vect{1}_r$ and
      $\overline{x}^l=\widehat{x}^l+\epsilon \vect{1}_r$.  We use the
      robust optimization framework~\cite{AM-AM-LS-DT-AV:17} for
      designing robust neural networks. Let
      $\mathcal{L}$ be the cross-entropy loss function, then one can define
      the following robust training problem for the implicit neural network~\eqref{eq:inn}:
      \begin{align}\label{eq:training}
        \min_{W,U,C,b,c,\eta}\quad&\sum_{l=1}^{N} \max_{x^l\in
        [\underline{x}^l,\overline{x}^l]}\mathcal{L}(f_{\ON}(x^l),\widehat{y}^l), \nonumber\\
        &z^l = \ON(z^l,x^l),\quad f_{\ON}(x^l)=
        C z^l + c, \nonumber\\
        &\mu_{\infty,[\eta]^{-1}}(W)\le \gamma,
      \end{align}
      where $\gamma < 1$ is a hyperparameter ensuring
      the fixed point problem is well-posed. Unfortunately, using the
      robust loss for training in~\eqref{eq:training} leads to a
      min-max optimization problem that scales poorly with the size of
      the training data~\cite{EW-ZK:18}. In the next two paragraphs,
      we provide two relaxation of this algorithm using our estimates
      of the Lipschitz constants and the tight inclusion functions.

      \paragraph*{Lipschitz bounds.}
     Since the cross-entropy loss function is convex, there exists $\lambda>0$
     such that, for every $l\in \{1,\ldots,N\}$, 
     \begin{align*}
       \mathcal{L}(f_{\ON}(x^l),\widehat{y}^l) \le
       \mathcal{L}(f_{\ON}(\widehat{x}^l),\widehat{y}^l) + \lambda \Lip_{\infty}(f_{\ON}).
     \end{align*}
     Now using the upper bound on $\Lip_{\infty}(f_{\ON})$
     in Theorem~\ref{thm:lip}\eqref{p1:bound},  for every $l\in \{1,\ldots,N\}$, 
\begin{align*}
       \mathcal{L}(f_{\ON}(x^l),\widehat{y}^l) \le
       \mathcal{L}(f_{\ON}(\widehat{x}^l),\widehat{y}^l) + \lambda \left(\tfrac{\eta_{\max}}{\eta_{\min}}\right)\tfrac{\norm{U}{\infty}
        \norm{C}{\infty} }
          {1-\mu_{\infty,[\eta]^{-1}}(W)^+}. 
     \end{align*}
  As a result, using the Lipschitz bounds for $f_{\ON}$, we can relax
  the training optimization problem~\eqref{eq:training} to obtain the
  \emph{Lipschitz training algorithm}:
\begin{align} \label{eq:training-lip}
\min_{W,U,C,b,c,\eta}\quad&\sum_{l=1}^{N} \mathcal{L}(f_{\ON}(\widehat{x}^l),\widehat{y}^l) + \lambda \left(\tfrac{\eta_{\max}}{\eta_{\min}}\right)\tfrac{\norm{U}{\infty}
        \norm{C}{\infty} }
          {1-\mu_{\infty,[\eta]^{-1}}(W)^+}, \nonumber\\
        &z^l = \ON(z^l,x^l),\quad f_{\ON}(x^l)=
          C z^l+ c, \nonumber\\
        &\mu_{\infty,[\eta]^{-1}}(W)\le \gamma,
\end{align}
where $\lambda$ is the
regularization hyperparameter and $\gamma\in (-\infty,1)$ is the
well-posedness hyperparameter.

\paragraph*{Inclusion functions.} Following~\citep[Eq. 3]{HZ-etal:20} and~\citep{SJ-MA-AD-FB-SC:21y}, for each input $x' \in [\underline{x},\overline{x}]$, we define the \emph{relative classifier variable}, $m^x(x')\in \real^q$ by
\begin{equation}\label{eq:relativeclassifier}
    m^x(x') = f_{\ON}(x')_i \vectorones[q] - f_{\ON}(x'),
\end{equation} 
where $i$ is the correct label of $x$. Note that $m^x(x')_j > 0$ for
all $j \neq i$ if and only if $x'$ is labeled the same as $x$ by the
neural network. Therefore, we write $m^x(x') = T^x f_{\ON}(x') = T^x
Cz^*_{x'} + T^x c$, for suitable specification matrix $T^x \in
\{-1,0,1\}^{q \times q}$ defined via the linear
transformation~\eqref{eq:relativeclassifier}.  We can use Theorem~\ref{thm:INN} to define
\begin{align}\label{eq:relative-classifier}
    \underline{m}^x(\underline{x},\overline{x}) = [T^x C]^+ \underline{z}^* + [T^x C]^- \overline{z}^* + T^x c. 
\end{align}
It is clear that $\underline{m}^x(\underline{x},\overline{x})$ is a
lower bound for the relative classifier variable $m^x$. Using~\citep[Theorem 2]{EW-ZK:18},
for the cross-entropy loss, and for
$\underline{m}^l :=
\underline{m}^{\widehat{x}^l}(\underline{x}^l,\overline{x}^l)$ and
every $l\in\{1,\ldots,N\}$,
\begin{align*}
  \mathcal{L}(f_{\ON}(x^l),\widehat{y}^l) \le \mathcal{L}(-\underline{m}^l,\widehat{y}^l),\quad \text{for all } x^l\in [\underline{x}^l,\overline{x}^l].
\end{align*}
Therefore, one can instead use the loss function
$\sum_{l=1}^{N}\mathcal{L}(-\underline{m}^l,\widehat{y}^l)$ as a
tractable upper bound on the robust loss in the training optimization
problem.

As pointed out in~\cite{SG-etal:18}, using the robust loss
$\sum_{l=1}^{N}\mathcal{L}(-\underline{m}^l,\widehat{y}^l)$ in the
training can lead to convergence instability. To improve the stability
of the training, following~\cite{SG-etal:18}, we instead use a convex
combination of the empirical risk loss and the robust loss. Therefore,
for $T^l := T^{\widehat{x}^l}$, we get the 
\emph{inclusion function training algorithm}:
\begin{align}\label{eq:training-robust-2}
  \min_{W,U,C,b,c,\eta}&\quad\quad \sum_{l=1}^{N}(1-\kappa) \mathcal{L}(f_{\ON}(\widehat{x}^l),\widehat{y}^l) + \kappa \mathcal{L}(-\underline{m}^{l},\widehat{y}^l),\nonumber\\
  \begin{bmatrix}
    \underline{z}^l\\
    \overline{z}^l
  \end{bmatrix} &= \begin{bmatrix}
    \ON^{\mathrm{E}}(\underline{z}^l,\overline{z}^l,\underline{x}^l, \overline{x}^l)\\
    \ON^{\mathrm{E}}(\overline{z}^l,\underline{z}^l,\overline{x}^l,\underline{x}^l)
  \end{bmatrix}, \nonumber\\
  \underline{m}^{l} &=
                      [T^{l}C]^+\underline{z}^l+[T^{l}C]^-\overline{z}^l
                      + T^{l}c, \; \; z^l = \ON(z^l,\widehat{x}^l),
                      \nonumber \\ f_{\ON}(\widehat{x}^l)&= Cz^l + c, \quad
                                         \mu_{\infty,[\eta]^{-1}}(W)\le
                                         \gamma. 
\end{align}
where $\kappa\in [0,1]$ is the regularization parameter and
$\gamma\in (-\infty,1)$ is the well-posedness hyperparameter.

In both optimization problems~\eqref{eq:training-lip}
and~\eqref{eq:training-robust-2}, we can remove the constraint
$\mu_{\infty,[\eta]^{-1}}(W)\leq \gamma$ in the training using the
following parametrization of weight matrix $W$~\citep[Appendix
B]{SJ-AD-AVP-FB:21f}:
\begin{align}\label{eq:parametrization-weight}
  W = [\eta]^{-1}T[\eta] - \diag(|T|\vect{1}_n) +\gamma I_n,
\end{align}
for an unconstrained matrix $T\in \real^{n\times n}$. Using the
parametrization~\eqref{eq:parametrization-weight} in the training
problem not only improves the computational efficiency of the
optimization but also allows for the design of implicit neural
networks with additional structure such as convolutions. We refer
to~\citep{SJ-AD-AVP-FB:21f} for more details about training of
convolutional implicit neural networks.

\section{Theoretical and numerical comparisons}
In this section, we first introduce the notion of
certified adversarial robustness for classification problems.
We say that the implicit neural network~\eqref{eq:inn} is \emph{certified
  adversarially robust} with radius $\epsilon$ at input $x$ if
\begin{multline}
  \max_{v\in\real^r} \; \setdef{f_{\ON}(x')_i-\max_{j\ne
  i}f_{\ON}(x')_j}{\|x-x'\|_{\infty}\le \epsilon,\;\;\nonumber \\ i
\;\;\mbox{ is the correct label of }x} \ge 0.
  \end{multline}
  Verification of certified adversarial robustness can be computationally complicated. In the next
  two paragraphs, we use the frameworks in
  Section~\ref{sec:reach} to provide lower bounds for certified
  adversarial robustness.

\paragraph*{Lipschitz bounds.}
Consider an implicit neural network~\eqref{eq:inn} with $\eta\in
\real^n_{>0}$ such that $\mu_{\infty,[\eta]^{-1}}(W)<1$. Using Theorem~\ref{thm:lip}\eqref{p2:interval}, if
\begin{multline}\label{eq:CAR-lip}
    f_{\ON}(x)_i-\max_{j\ne
    i}f_{\ON}(x)_j \\
    -2 \left(\frac{\eta_{\max}}{\eta_{\min}}\right)
    \frac{\norm{U}{\infty}
    \norm{C}{\infty} \epsilon }
    {1-\mu_{\infty,[\eta]^{-1}}(W)^+}\ge 0 
\end{multline}
holds, then the implicit neural network~\eqref{eq:inn} is certified adversarially
robust with radius $\epsilon$.

\paragraph*{Inclusion functions.} Consider an implicit neural
network~\eqref{eq:inn} with $\eta\in
\real^n_{>0}$ such that $\mu_{\infty,[\eta]^{-1}}(W)<1$. Note that
$\underline{m}^x(\underline{x},\overline{x})$ defined in~\eqref{eq:relative-classifier} is a
lower bound for the relative classifier variable $m^x$. Thus, one can use Theorem~\ref{thm:INN} to show that, if  $i$ is the correct label of the input $x$ and
\begin{align}\label{eq:CAR-inclusion}
  \min_{j \neq i} \{\underline{m}_j^x(x-\epsilon\vect{1}_r,
  x+\epsilon\vect{1}_r)\}\ge 0
\end{align}
holds, then the implicit neural network~\eqref{eq:inn} is
certified adversarially robust with radius $\epsilon$.

      \subsection{MNIST experiments}
      In this section, we compare the certified adversarial robustness
      of different approaches on the MNIST handwritten digit dataset,
      a dataset of $70000$ $28 \times 28$ pixel images, $60000$ of
      which are for training, and $10000$ for testing. Pixel values
      are normalized in $[0,1]$\footnote{All experiments were run on a Tesla P100-PCIE-16GB GPU in Google Colab}.

      In the first experiment, we train an
      implicit neural network with $n=100$ neurons using the Lipschitz training
      algorithm~\eqref{eq:training-lip} for different values of
      $\lambda\in \{0,10^{-1}, 10^{-2}, 10^{-3}, 10^{-4}, 10^{-5}\}$. For well-posedness, we
      imposed $\mu_{\infty,[\eta]^{-1}}(W) \leq 0$ and we directly parameterize $W$ as in~\eqref{eq:parametrization-weight}. Training data
      is broken up into batches of $100$ and the model was trained
      for $15$ epochs with a learning rate of $10^{-3}$. Regarding training times, using the $\alpha$-average iteration in Theorem~\ref{thm:inn-well-posed}\eqref{p2:average}, our model takes on average 9.8 seconds to train per epoch. After
      training, the models are validated on test data using the
      sufficient conditions for certified adversarial
      robustness~\eqref{eq:CAR-lip}. For fixed $\epsilon$ and the $10000$ test
      images, over $10$ trials, it takes, on average, $2.250$ seconds
      to verify the certified adversarial robustness using the
      formula~\eqref{eq:CAR-lip}. To
      provide a conservative upper bound on the certified adversarial
      robustness and to observe empirical robustness, the model was
      additionally attacked using projected gradient descent (PGD) and
      fast-gradient sign method (FGSM) attacks. Results from these
      experiments are shown in Figure~\ref{fig:NEMON-MNIST}. We compare robustness of our implicit neural networks with the Monotone Operator Deep Equilibrium (MON) model~\citep{EW-JZK:20} with the monotonicity parameter $m=1$.
      \begin{figure}[!ht]
	\begin{center}
	    \includegraphics[width = 0.92\linewidth,clip]{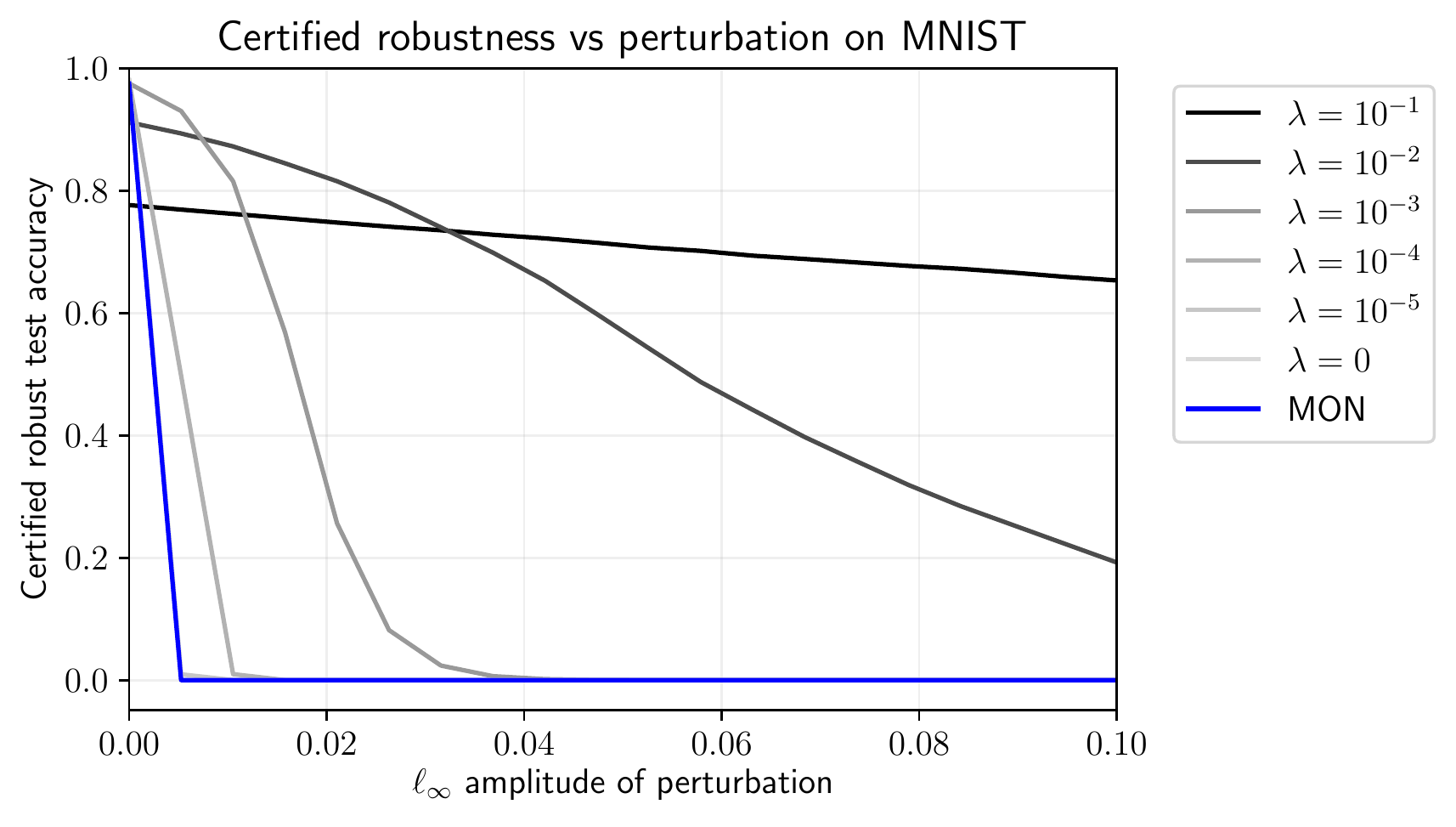}\\
		\hspace{-1.6cm}\includegraphics[width = 0.75\linewidth,clip]{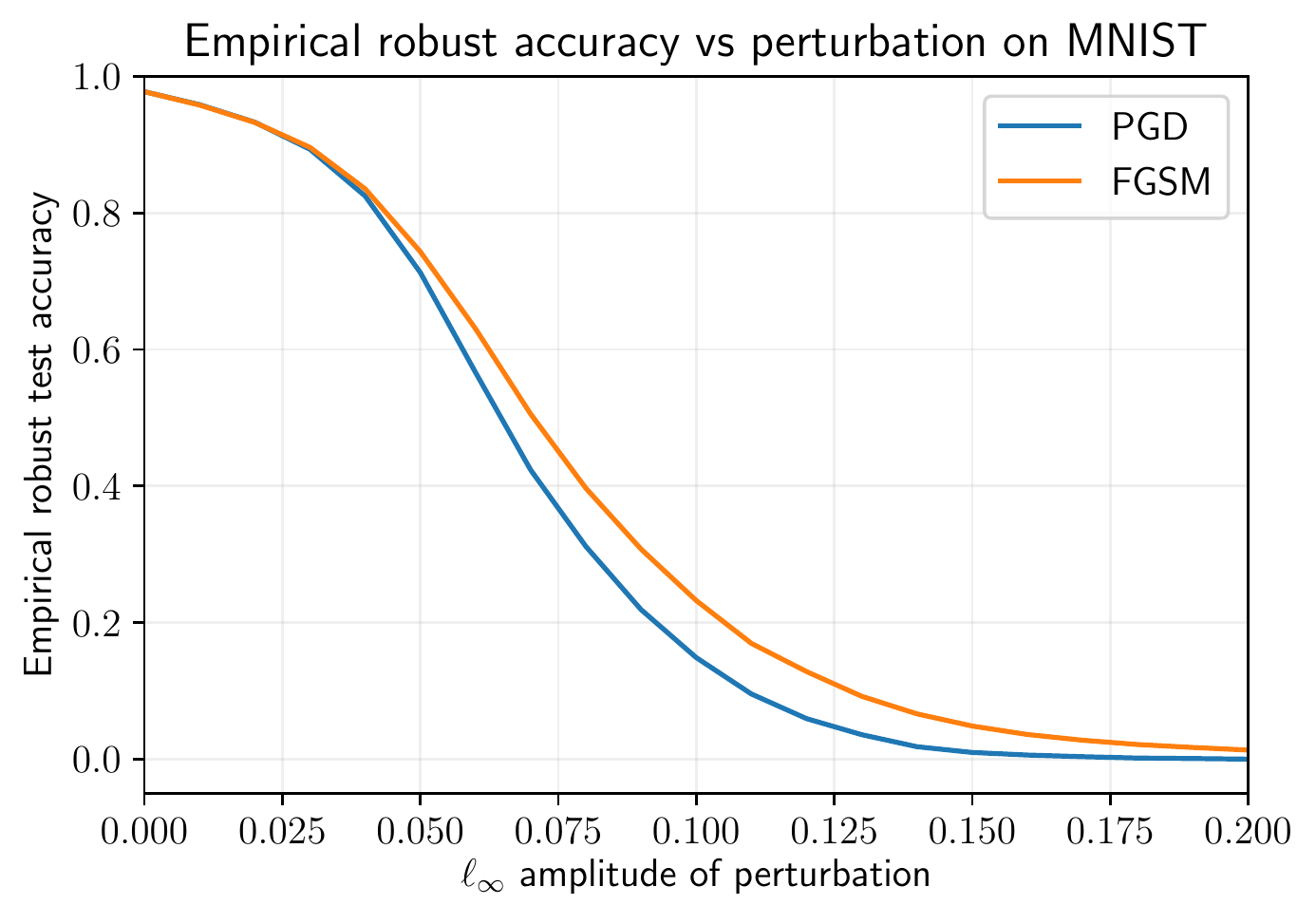}
	\end{center}
\vspace{-.35cm}
    \caption{On the top is a plot of the certified adversarial
      robustness of the models trained using Lipschitz training algorithm~\eqref{eq:training-lip} for
      different regularization hyperparameter $\lambda$. The top plot include the certified adversarial
      robustness of the MON model trained with the monotonicity parameter $m=1$. For fixed
      $\epsilon$, the fraction of test inputs which are certified
      robust are plotted. At the bottom is a plot of the empirical
      robustness of the implicit neural model trained with
      the Lipschitz training algorithm~\eqref{eq:training-lip} with the regularization hyperparameter $\lambda=10^{-5}$ subject to the PGD and the FGSM attacks. Note the difference in scale on the horizontal axis.}
    \label{fig:NEMON-MNIST}
\end{figure}

In the second experiment, we train two implicit neural networks using
the inclusion function training algorithm~\eqref{eq:training-robust-2}. For
well-posedness, we impose $\mu_{\infty,[\eta]^{-1}}(W) \leq 0$ for
some $\eta \in \real^n_{>0}$ and we directly parameterize $W$ as in~\eqref{eq:parametrization-weight}. Both models are trained for $40$ epochs
using the Adam optimizer and a learning rate of $5 \times 10^{-4}$. At epoch $30$,
the learning rate is decreased to $10^{-4}$. For the first model (shown
by dashed lines in Figure~\ref{fig:MNIST}), we set $\epsilon=\kappa=0$
during the training. This is equivalent to training a
non-robust implicit neural network. For the second model (shown by solid lines in
Figure~\ref{fig:MNIST}), we pick $\subscr{\epsilon}{test} = 0.1$ and $\subscr{\kappa}{nom} =
0.75$. From epochs $1$ to $10$, $\kappa$ and $\epsilon$ are set to $0$
so the models undergo regular (non-robust) training. From epochs $11$
to $20$, $\epsilon$ and $\kappa$ are linearly increased such that at
epoch $20$, $\epsilon = \subscr{\epsilon}{test}$ and
$\kappa = \subscr{\kappa}{nom}$. Regarding the training time, using the $\alpha$-average iterations in Theorem~\ref{thm:INN}\eqref{p1:L4DC}, the second model takes on average 23.9 seconds to train per epoch. After training, both models are validated on test data using the sufficient conditions for certified adversarial robustness~\eqref{eq:CAR-inclusion}. For a fixed $\epsilon$ and  over the 10000 test images over 10 trials, on average, it takes 11.29 seconds to compute the certified adversarial robustness using formula~\eqref{eq:CAR-inclusion}. Figure~\ref{fig:MNIST} provides plots for this experiment.

      \begin{figure}[!ht]
 \begin{center}
		\includegraphics[width = 0.75\linewidth,clip]{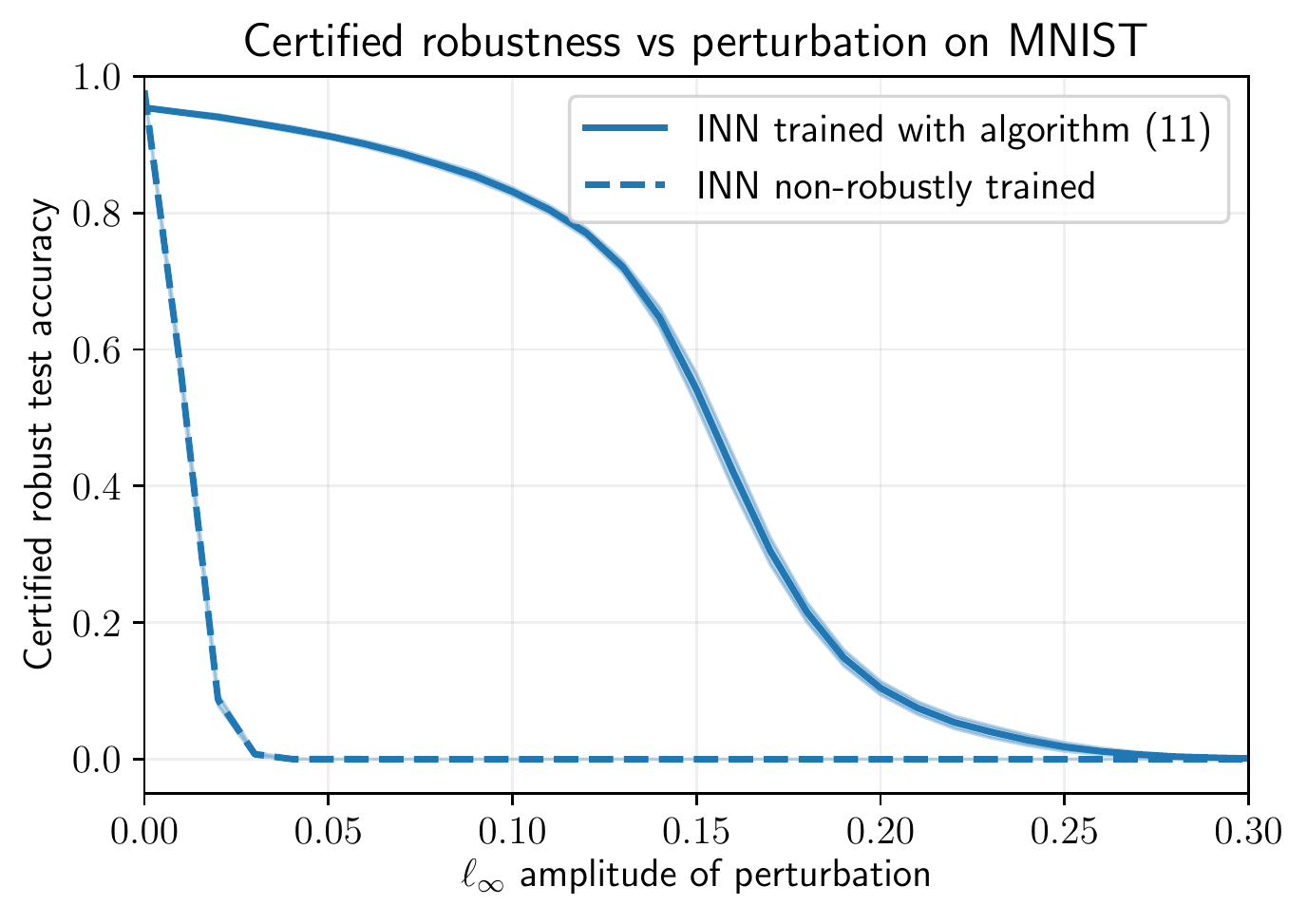}\\
		\includegraphics[width = 0.75\linewidth,clip]{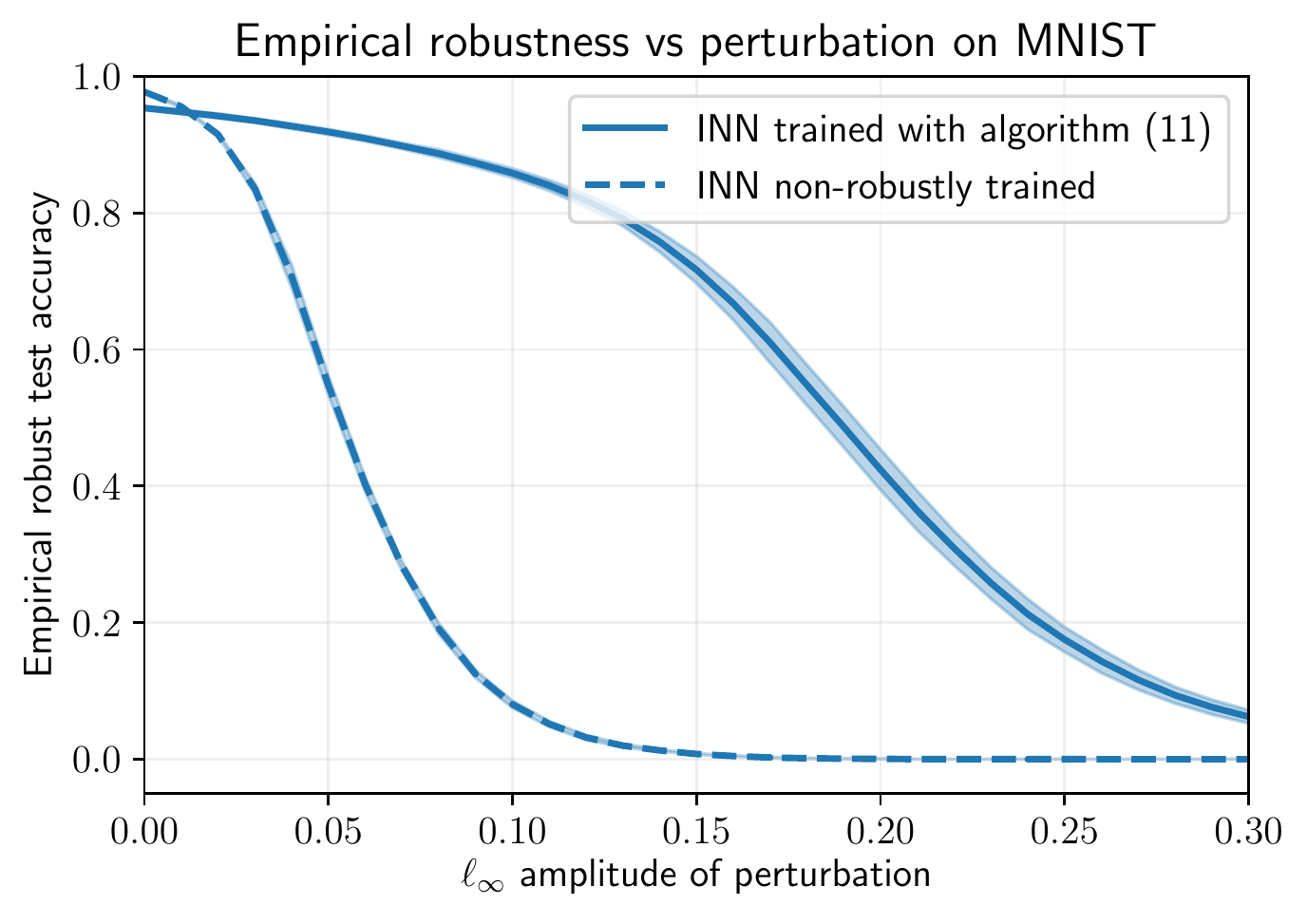}
	\end{center}
	\caption{On the top is a plot of the certified adversarial
      robustness of the trained model using inclusion function training
      algorithm~\eqref{eq:training-robust-2}. At the bottom is a plot of the empirical
      robustness of the implicit neural model trained with
      inclusion function training algorithm~\eqref{eq:training-robust-2} subject to the PGD attacks.}
      \vspace{-.35cm}
	\label{fig:MNIST}
\end{figure}

      \paragraph*{Summary evaluation.} From the first experiment, we
      can study the role of the regularization hyperparameter $\lambda$ in
      the Lipschitz training algorithm~\eqref{eq:training-lip}. From Figure~\ref{fig:NEMON-MNIST}
      it is clear that increasing the value of $\lambda$ leads to increased
      certified robustness of the model. However, this
      increase is obtained at the cost of reduction in clean accuracy. 
      Moreover, compared to the MON model~\citep{EW-JZK:20}, our implicit models with regularization parameter larger than $10^{-5}$ are certifiably more robust with respect to sizable $\ell_{\infty}$-norm input perturbations. 
      Finally, by
      comparing the top plot and the bottom plot in Figure~\ref{fig:MNIST},  one can see a very large gap between the
      certified adversarial robustness and the empirical robustness under both PGD and FGSM attacks.

      From the second experiment, we can conclude that implicit neural
      networks trained using the
      inclusion function training algorithm~\eqref{eq:training-robust-2} (solid lines) vastly outperform
      the non-robustly trained models (the dashed line) in both
      certified and empirical robustness. For instance, at an
      $\ell_\infty$ perturbation radius of $0.1$, we observe that the
      model trained using the inclusion function training algorithm, on average, has certified
      robustness of $83.13\%$ and empirical robustness of $85.84\%$
      with respect to PGD attack. It is worth mentioning that, at an
      $\ell_\infty$ perturbation radius of $0.1$, the non-robustly trained model has
      $0 \%$ certified robustness and $8.04\%$ empirical robustness 
      with respect to PGD attack.

      Finally, we compare the performance of the model trained using the
      Lipschitz optimization algorithm~\eqref{eq:training-lip} in Figure~\ref{fig:NEMON-MNIST}
      with the model trained using the inclusion function
      optimization algorithm~\eqref{eq:training-robust-2} in
      Figure~\ref{fig:MNIST}. We can deduce that (i) the Lipschitz bound approach is significantly faster in both training the models and verification of their certified adversarial robustness and (ii) 
      the inclusion function approach will lead to more accurate and more robust models.  

      \section{Conclusions}
      
      Using non-Euclidean contraction theory, we develop a framework for studying robustness of implicit neural networks. For a given implicit neural network, we use estimates of (i) its $\ell_{\infty}$-norm input-output Lipschitz constant, and (ii) its tight inclusion function to obtain $\ell_{\infty}$-norm box upper bounds for input-output behavior of the network. Based on these upper bounds, we design two algorithms for training and robustness verification of implicit neural networks. Empirical evidence shows the efficiency of our algorithms. 
      
      \section*{Acknowledgement}
      This work was supported in part by Air Force Office of Scientific Research under grants FA9550-22-1-0059 and FA9550-19-1-0015 and National Science Foundation under grant $\#$1836932.

\bibliography{alias,FB,main2}
\bibliographystyle{wfvml2022}

\end{document}